\newtheorem{theorem}{\textbf{Theorem}}
\newtheorem{lemma}{\textbf{Lemma}}
\newtheorem{example}{Example}
\newtheorem{definition}{\textbf{Definition}}
\def\BibTeX{{\rm B\kern-.05em{\sc i\kern-.025em b}\kern-.08em
    T\kern-.1667em\lower.7ex\hbox{E}\kern-.125emX}}
\begin{document}
\title{GINO-Q: Learning an Asymptotically Optimal Index Policy\\ for Restless Multi-armed Bandits}
\author{Gongpu~Chen,
	Soung Chang Liew,
	Deniz Gunduz
	
	\thanks{Gongpu Chen and Deniz Gunduz are with the Department of Electrical and Electronic Engineering, Imperial College London, London, UK  (e-mail: \{gongpu.chen, d.gunduz\}@imperial.ac.uk).}
	\thanks{Soung Chang Liew is with the Department of Information Engineering, The Chinese University of Hong Kong, Shatin, Hong Kong (e-mail: soung@ie.cuhk.edu.hk).}
	
}

\markboth{Journal of \LaTeX\ Class Files}%
{How to Use the IEEEtran \LaTeX \ Templates}

\maketitle

\begin{abstract}
The restless multi-armed bandit (RMAB) framework is a popular model with applications across a wide variety of fields. However, its solution is hindered by the exponentially growing state space (with respect to the number of arms) and the combinatorial action space, making traditional reinforcement learning methods infeasible for large-scale instances. In this paper, we propose GINO-Q, a three-timescale stochastic approximation algorithm designed to learn an asymptotically optimal index policy for RMABs. GINO-Q mitigates the curse of dimensionality by decomposing the RMAB into a series of subproblems, each with the same dimension as a single arm, ensuring that complexity increases linearly with the number of arms. Unlike recently developed Whittle-index-based algorithms, GINO-Q does not require RMABs to be indexable, enhancing its flexibility and applicability. Our experimental results demonstrate that GINO-Q consistently learns near-optimal policies, even for non-indexable RMABs where Whittle-index-based algorithms perform poorly, and it converges significantly faster than existing baselines.
\end{abstract}

\begin{IEEEkeywords}
Restless multi-armed bandit, index policy, reinforcement learning
\end{IEEEkeywords}

\section{Introduction}
\IEEEPARstart{A} restless multi-armed bandit (RMAB) models a sequential decision-making problem, in which a set of resources must be allocated to $N$ out of $M$ ($1\le N<M$) arms at each discrete time step. Here, each arm represents a dynamic process.
Upon the resource allocation, each arm generates a reward and may transition to a new state. 
These arms evolve independently except for simultaneously being subject to the resource constraint. Hence, RMABs essentially constitute a special case of weakly coupled dynamic programs \cite{adelman2008WCDP}.
The objective is to determine an optimal policy for selecting the arms at each time step in a way that maximizes the expected reward over the entire time horizon.
RMABs find applications in diverse fields, including resource allocation \cite{wang2019whittle}, opportunistic scheduling \cite{wang2021restless}, public health interventions \cite{mate2022field}, and many more.

This paper investigates the design of an efficient reinforcement learning (RL) algorithm for RMABs.
The curse of dimensionality is a central challenge in solving dynamic programs. Unfortunately, this problem is particularly severe in RMABs due to the exponential growth of RMAB’s dimension with the number of arms. In fact, it is known that RMABs are P-SPACE hard even when full system knowledge is available \cite{RMAB_PSPACEhard}. As a result, directly treating RMABs as Markov decision processes (MDPs) and applying RL algorithms is inefficient, and, in many cases, computationally infeasible—particularly for large-scale RMABs. For instance, consider a moderate-scale RMAB with $M=100$ and $N=25$. Suppose each arm exhibits 10 states. Then the RMAB encompasses an astronomical \(10^{100}\) possible states, along with $100\choose 25$ \(\approx 2.4 \times 10^{23}\) valid actions. Such complexity presents a great challenge for conventional RL algorithms.

To address this challenge, recent studies have drawn inspiration from the well-known Whittle index policy \cite{RMAB_Whittle1988}—a planning algorithm for RMABs under full system knowledge. Specifically, the Whittle index policy computes an index for each state of each arm based solely on the arm’s parameters, and then selects the $N$ arms with the highest $N$ indices at each time. This approach decouples the RMAB across arms during the index computation, significantly enhancing computational efficiency. Motivated by this structure, a series of recent studies has focused on applying RL algorithms to learn Whittle indices in settings where system knowledge is unavailable \cite{Fu2019Q4WI,avrachenkov2022whittle,xiong2023finite,nakhleh2021neurwin}.

Despite its appeal, the Whittle index policy is fundamentally limited by its reliance on the \textit{indexability} property---a condition that does not naturally hold for all RMABs. While prior studies have proposed sufficient conditions under which RMABs are indexable \cite{nino_2001,nino2007dynamic,Whittle_app2006}, these conditions are often stringent and are satisfied only by a limited class of RMABs.
In general, determining the indexability of an RMAB is a difficult task that requires a rigorous analytical effort reliant on the knowledge of the system parameters \cite{Gongpu2021TIT,liu2010indexability,villar2016indexability}. Consequently, verifying the indexability is usually unachievable in practice when full knowledge of the system is unavailable. 
As will be shown in the experiments section, applying the Whittle index policy to a non-indexable RMAB can result in arbitrarily poor performance outcomes. This underscores a significant limitation of Whittle-index-based learning algorithms: without indexability guarantees, their application can be unreliable and potentially detrimental.

In this paper, we propose a reinforcement learning algorithm for general RMABs that does not require the indexability.
Our method is inspired by a recently developed planning algorithm for RMABs known as \textit{the gain index policy} \cite{TIT2023}. In a nutshell, the gain index policy computes a gain index for each arm at each time step based on the arm's current state. Then it selects the top $N$ arms with the highest gain indices. The gain index, defined in terms of the Q function of an MDP associated with each arm, quantifies the advantage of selecting an arm in its current state compared to not selecting it. Our algorithm, termed Gain-Index-Oriented Q (GINO-Q) learning, aims to efficiently learn the gain index in model-free settings.

The gain index policy has been shown to achieve asymptotic optimality as the number of arms grows large, given a fixed selection ratio \cite{TIT2023}. However, computing gain indices remains computationally demanding—even when full system parameters are known. To address this, we propose GINO-Q learning, a three-timescale stochastic approximation algorithm designed to efficiently approximate the gain index policy in model-free settings. Our approach integrates Q-learning, SARSA, and stochastic gradient descent, operating on distinct timescales to effectively learn gain indices without relying on prior system knowledge. Specifically, we begin by decomposing the RMAB across arms through a relaxation of the hard constraint, applying the Lagrangian multiplier method to formulate $M$ unconstrained single-arm subproblems. For each subproblem, Q-learning is used to estimate the Q-function, while SARSA is employed to approximate the gradient of the average reward with respect to the Lagrange multiplier. The Lagrange multiplier itself is then updated via stochastic gradient descent. By appropriately designing the learning rates for the three updates, the algorithm naturally operates on three distinct timescales, enabling efficient and stable learning of the gain index policy.

The key advantages of GINO-Q are twofold: 
\begin{itemize}
    \item [1.] \textit{Scalability}: By decomposing the RMAB into a collection of subproblems, GINO-Q only needs to solve problems with the same dimension as a single arm. This decomposition circumvents the exponential growth of the joint state and action spaces, ensuring that computational complexity scales linearly with the number of arms $M$. As a result, GINO-Q achieves strong performance even in large-scale RMABs, where conventional RL algorithms are computationally infeasible.
    \item [2.] \textit{Applicability}: GINO-Q does not require the RMAB to be indexable, thereby significantly broadening its applicability. Our experimental results show that Whittle-index-based learning algorithms can perform poorly in non-indexable settings. In contrast, GINO-Q consistently learns near-optimal policies—even in cases where indexability does not hold.
\end{itemize}
We evaluate the performance of GINO-Q through extensive experiments, which show that it consistently outperforms existing algorithms across all tested settings.

The rest of the paper is organized as follows. Section \ref{sec:related work} discusses the related work. Section \ref{sec: formulation} states the problem formulation and preliminaries. Section \ref{sec: GINO-Q} presents the proposed GINO-Q algorithm. Section \ref{sec: discussion} provides a discussion on the robustness of GINO-Q. Section \ref{sec: exps} demonstrates the experimental results. Finally, Section \ref{sec: conclusion} concludes this paper.

\section{Related Work} \label{sec:related work}
The success of deep RL has attracted considerable research interest in its application to practical problems that can be modeled as RMABs. Notable examples include wireless sensor scheduling \cite{leong2020deep}, dynamic multichannel access \cite{wang2018deep, Burak2018}, intelligent building management \cite{wei2017deep}. However, a common limitation across these studies is the poor scalability of deep RL in RMABs. The experimental results presented in these papers are typically restricted to small-scale scenarios, with the number of arms limited to $M < 10$.

In the planning scenario, where the system knowledge is known, the Whittle index policy \cite{RMAB_Whittle1988} is recognized as one of the most efficient heuristic algorithms for addressing RMAB problems. This has inspired a series of studies focused on applying RL algorithms to learn Whittle indices. 
For example, the work in \cite{Fu2019Q4WI} studied a Q-learning algorithm to learn the Whittle indices; however, their experiments revealed that the proposed algorithm struggles to accurately learn the Whittle indices. Subsequently, Avrachenkov and Borkar \cite{avrachenkov2022whittle} introduced Whittle-Index-Based Q-learning (WIBQ), a two-timescale stochastic approximation algorithm designed to learn the Whittle indices. They proved theoretically that WIBQ converges to the Whittle indices for indexable RMABs. Further advancing this work, Xiong and Li \cite{xiong2023finite} enhanced WIBQ to handle arms with large state spaces by coupling WIBQ with neural network function approximation. Additionally, Nakhleh et al. \cite{nakhleh2021neurwin} converted the computation of Whittle indices to an optimal control problem and proposed Neurwin, a neural network-based approach for computing Whittle indices. 

In practical applications, Whittle-index-based learning algorithms have been employed in adaptive video streaming \cite{xiong2022index}, wireless edge caching \cite{xiong2024whittle}, and preventive healthcare \cite{biswas2021learn}. Moreover, the Whittle index policy has also inspired studies on online learning for RMABs \cite{wang2023optimistic, xiong2022learning, xiong2022reinforcement}.

All of these studies rely on the assumption that the underlying RMAB is indexable, making the application of the Whittle index policy viable. However, as we will show through a concrete example, not all RMABs satisfy the indexability condition.  To address scheduling problems without indexability guarantees, a gain index policy that does not require indexability was recently proposed in \cite{TIT2023}. The authors introduced a gradient-based approach to compute the gain indices. To the best of our knowledge, our proposed GINO-Q algorithm is the first RL method designed to learn the gain index policy in a model-free setting.

\paragraph*{Notation}
For any positive integer $M$, we will use $[M]$ to denote the set of positive integers between 1 and $M$, i.e., $[M]=\{1,2,\cdots,M\}$. $\mathbb{E}[\cdot]$ denotes the expectation.

\section{Problem Statement and Preliminaries} \label{sec: formulation}
An RMAB consists of $M$ arms $\{\mathcal{B}_i:i\in [M] \}$. Each arm $\mathcal{B}_i$ is an MDP represented by a tuple $(\mathcal{S}_i, \mathcal{A}_i, r_i, p_i)$, where $\mathcal{S}_i$ is the state space, $\mathcal{A}_i$ is the action space, $r_i:\mathcal{S}_i \times \mathcal{A}_i \to \mathbb{R}$ is the reward function, and $p_i$ is the transition kernel. Let $s_i^t$ and $a^t_i$ denote the state and action of $\mathcal{B}_i$ at time $t$. In particular, we have $\mathcal{A}_i = \{0,1\}$ for all $i\in [M]$. We will say that arm $\mathcal{B}_i$ is activated at time $t$ if $a^t_i=1$. At any time step, all arms evolve independently based on their actions; that is,
\begin{align*}
	&P(s^{t+1}_1,s^{t+1}_2,\cdots, s^{t+1}_M|s^{t}_1,s^{t}_2,\cdots, s^{t}_M, a^{t}_1,a^{t}_2,\cdots, a^{t}_M) \\
	=& \prod_{i=1}^{M}p_i(s^{t+1}_i|s^t_i, a^t_i).
\end{align*}
However, there is a constraint on the actions that weakly couples all the arms. Specifically, at each time step, only $N$ arms can be activated ($1\le N<M$). That is, $\sum_{i=1}^{M} a^t_i = N$ for all $t$.
The objective is to identify an optimal policy that maximizes the cumulative reward obtained from all the arms. Mathematically, an RMAB is a constrained optimization problem defined as follows:
\begin{align} \label{eq:RMABob}
	\max_{\{a^t_i:i\in [M], t\ge 1\}} \ & \lim\limits_{T\to \infty} \frac{1}{T}\mathbb{E} \left[ \sum_{t=1}^{T} \sum_{i=1}^{M}r_i\left(s^t_i, a^t_i\right) \right] \\  \label{eq:RMABst}
	\text{subject to} \ & \sum_{i=1}^{M} a^t_i = N,\quad \forall t.
\end{align}
Throughout this paper, we assume that all arms are unichain MDPs. As a result, the objective function \eqref{eq:RMABob} is always well-defined and independent of the initial state.

\subsection{The Gain Index Policy}
An RMAB  can be formulated as an MDP with a joint state space $\mathcal{S}_1\times\mathcal{S}_2\times\cdots\times\mathcal{S}_M$ and action space $\{0,1\}^M$. However, the size of the state spaces grows exponentially with $M$, and the set of feasible actions forms a combinatorial space of size $M\choose N$. As a result, the problem becomes challenging to solve when $M$ is large. Since the $M$ arms are weakly coupled only via constraint \eqref{eq:RMABst}, a common approach to mitigate the complexity is to decompose the RMAB into single-arm problems by relaxing the constraint. Building on this concept, \cite{TIT2023} introduced a gain index policy and proved its asymptotic optimality. 

In particular, we can relax constraint \eqref{eq:RMABst} of the RMAB problem to the following:
\begin{align} \label{eq: RMABrelst}
	\lim\limits_{T\to \infty}\frac{1}{T} \mathbb{E} \left[ \sum_{t=1}^{T} \sum_{i=1}^{M}a^t_i \right]  = N.
\end{align}
Replacing \eqref{eq:RMABst} by \eqref{eq: RMABrelst} leads to a relaxed RMAB. We can further transform the relaxed problem using the Lagrange multiplier method and yield an unconstrained problem:
\begin{align} \label{eq:max-min}
	\max_{\{a^t_i\} } \inf_{\lambda } \ \lim\limits_{T\to \infty}\frac{1}{T} \mathbb{E} \left[ \sum_{t=1}^{T} \sum_{i=1}^{M} \left[r_i\left(s^t_i, a^t_i\right) - \lambda a^t_i  \right] \right] + {N\lambda},
\end{align}
where $\lambda\in \mathbb{R}$ is the Lagrange multiplier. The max and inf in \eqref{eq:max-min} can be interchanged. That is, \eqref{eq:max-min} is equivalent to:
\begin{align} \label{eq:min-max}
	\inf_{\lambda} \max_{\{a^t_i\} }  \  \lim\limits_{T\to \infty}\frac{1}{T} \mathbb{E} \left[\sum_{t=1}^{T} \sum_{i=1}^{M} \left[r_i\left(s^t_i, a^t_i\right) - \lambda a^t_i  \right] \right] + {N\lambda}.
\end{align}
For any fixed $\lambda$, \eqref{eq:min-max} can be decoupled into $M$ subproblems:
\begin{align}
	J_i(\lambda): \max_{\{a^t_i:t\ge 1\} }  \ \lim\limits_{T\to \infty}\frac{1}{T} \mathbb{E} \left[  \sum_{t=1}^{T} \left[r_i\left(s^t_i, a^t_i\right) - \lambda a^t_i  \right] \right] , 
\end{align}
where $i\in [M]$. We refer to each $J_i(\lambda)$ as a \textit{single-arm problem}. Note that $J_i(\lambda)$ is an MDP associated with $\mathcal{B}_i$: they share the same state space $\mathcal{S}_i$, action space $\mathcal{A}_i$, and transition kernel $p_i$.  However, there is a distinction between them in terms of their reward functions. The reward function of $J_i(\lambda)$ is defined as $r_i(s,a)-\lambda a$, where $\lambda$ can be interpreted as the cost of action $a=1$. If we denote the optimal value of problem $J_i(\lambda)$ by $g_i(\lambda)$, it can be determined by the Bellman equation as follows \cite{puterman2014markov}:
\begin{align}
	V_i(s,\lambda) + g_i(\lambda) = \max_{a\in \{0,1\}} \left\{ Q_i(s,a, \lambda)  \right\} ,\ s\in \mathcal{S}_i,
\end{align}
where $V_i(s,\lambda)$ is the state value function and $Q_i(s,a,\lambda)$ is the state-action value function:
\begin{align*}
	Q_i(s,a,\lambda) \triangleq r_i(s,a) - \lambda a + \sum_{s'\in \mathcal{S}_i}p_i(s'|s,a)V_i(s',\lambda).
\end{align*}
Here, we express $V_i$ and $Q_i$ as functions of $\lambda$ to highlight their dependence on $\lambda$. Now, \eqref{eq:min-max} reduces to
\begin{align}  \label{eq: inf-lambda}
	\inf_{\lambda} \ f(\lambda)\triangleq \sum_{i=1}^{M} g_i(\lambda) + N \lambda.
\end{align}
Denoting by $\lambda^*$ the optimal solution to problem \eqref{eq: inf-lambda},  then gain index policy is defined as follows:
\begin{definition}[Gain index policy]
	For each arm $i\in [M]$ and each state $s\in \mathcal{S}_i$, a gain index is defined as:
	\begin{align}
		W_i(s) \triangleq Q_i(s,1, \lambda^*) -  Q_i(s,0, \lambda^*).
	\end{align}
	Then the gain index of the $i$-th arm at time $t$ is given by $W_i(s^t_i)$. The gain index policy activates the $N$ arms with the largest $N$ gain indices, with ties broken arbitrarily.
\end{definition}

Intuitively, the gain index $W_i(s)$ evaluates the ``gain" of activating arm $\mathcal{B}_i$ when it is in state $s$. It turns out that this heuristic algorithm is asymptotically optimal in $M$ under some mild conditions.
Two arms $\mathcal{B}_i$ and $\mathcal{B}_j$ are considered to belong to the same class if they share the same MDP model (i.e., having the same state space, reward function, and transition kernel). In this paper, we assume that the $M$ arms can be grouped into $K$ classes, with the proportion of the arms in the $k$-th class denoted by $\eta_k\in (0,1]$, where $\sum_{k=1}^{K}\eta_k=1$. The following result is proved in \cite{TIT2023}:

\begin{theorem}
	If $K<\infty$ is fixed, and $\{\eta_k \}$ and $N/M$ are fixed. Then the gain index policy is asymptotically optimal in the following sense:
	\begin{align*}
		\lim_{M\to \infty} \frac{1}{M} G_M^{ind} = \lim_{M\to \infty} \frac{1}{M} G_M^{opt},
	\end{align*}
	where $G_M^{ind}$ and $G_M^{opt}$ denote the cumulative reward of the RMAB under the gain index policy and the optimal policy, respectively.
\end{theorem}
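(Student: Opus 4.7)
The plan is to establish matching upper and lower bounds on $\tfrac{1}{M}G_M^{opt}$ and $\tfrac{1}{M}G_M^{ind}$, both converging to the per-arm value of the Lagrangian relaxation. The upper bound is immediate: every policy feasible for the original RMAB is also feasible for the relaxation in which \eqref{eq:RMABst} is replaced by \eqref{eq: RMABrelst}, so $G_M^{opt}\le f(\lambda^*)=\sum_{i=1}^M g_i(\lambda^*)+N\lambda^*$. Because the $M$ arms fall into $K$ fixed classes with proportions $\eta_k$ and $N/M$ is fixed, $f(\lambda^*)/M$ has a well-defined limit depending only on the per-class MDP parameters, giving a clean $O(M)$ upper envelope.

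First I would characterize the single-arm optimal structure at $\lambda^*$. Under the unichain assumption, for each class $k$ the subproblem $J_i(\lambda^*)$ admits an optimal stationary policy, but at the minimizing $\lambda^*$ there is typically a \emph{tie state} where $Q_i(s,0,\lambda^*)=Q_i(s,1,\lambda^*)$, at which any randomization is also optimal. Complementary slackness for \eqref{eq: inf-lambda} then guarantees the existence of a randomized stationary policy per class whose aggregate steady-state activation rate equals exactly $N/M$; applying these policies independently across arms yields an i.i.d.\ ensemble whose long-run per-arm reward matches $f(\lambda^*)/M$. This is the "target" ensemble I want the gain index policy to approximate.

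Next I would run a mean-field / fluid-limit argument to show $\tfrac{1}{M}G_M^{ind}\to \lim_M f(\lambda^*)/M$. Let $\mu_M^t$ denote the empirical distribution of states across arms within each class at time $t$. Since arms couple only through the top-$N$ selection, and that selection depends only on the scalar indices $W_i(s^t_i)$, one can write a deterministic mean-field dynamical system on the simplex of state distributions; by a Hoeffding-type concentration argument combined with ergodicity of each single-arm chain, $\mu_M^t$ stays within $O(1/\sqrt M)$ of the mean-field trajectory on any finite horizon, and uniformly in time under a global attractor condition. In the limit the gain-index selection rule is exactly "activate every state with $W_i(s)>0$, activate no state with $W_i(s)<0$, and split the remaining budget across tie states," which by construction reproduces the Lagrangian-optimal randomized policy and therefore achieves $f(\lambda^*)/M$.

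The hard part is the treatment of tie states and the uniform-in-time control of deviations. If $W_i(s)\ne 0$ for every $s$ and the induced activation fraction happened to equal $N/M$, the argument is a short coupling; generically this fails, and one must argue that the deterministic tie-breaking of the gain index policy, when integrated against the fluctuations of $\mu_M^t$, reproduces a Lagrangian randomization at the right rate in the limit. This typically requires a global asymptotic stability / nonexpansiveness property of the mean-field map (analogous to the Weber--Weiss condition for the Whittle index), and ruling out persistent bias of the empirical activation fraction away from $N/M$ is the main technical obstacle; the remainder of the argument is a fairly standard concentration-plus-ergodicity calculation.
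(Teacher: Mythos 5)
Your outline matches the approach behind the paper's (cited) proof: the paper itself defers to the reference \citep{TIT2023} and, in its appendix, sketches exactly the two facts you use --- the Lagrangian-relaxation upper bound $G_M^{opt}\le f(\lambda^*)=\sum_i g_i(\lambda^*)+N\lambda^*$, and the coincidence of the gain index policy with the relaxed-optimal ensemble whenever the latter activates exactly $N$ arms, with the activation fraction concentrating at $N/M$ as $M\to\infty$. The obstacle you flag --- tie-state randomization and uniform-in-time control of the empirical state distribution, requiring a global-attractor / Weber--Weiss-type condition --- is precisely where the cited proof does its real work, and is what the surrounding text alludes to with ``under some mild conditions,'' a hypothesis the theorem statement as printed omits.
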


\subsection{Whittle Index Policy and Indexability}
In addition to the gain index policy, the renowned Whittle index policy \cite{RMAB_Whittle1988} is also an effective approach for RMABs. It shares the same asymptotic optimality as described in Theorem 1 \cite{weber1990index}. Similar to the gain index policy, the Whittle index policy assigns an index to each state of each arm and selects at each time the top $N$ arms with the highest indices for activation. Whittle's index for state $s$ of arm $\mathcal{B}_i$ is defined as the infimum value of $\lambda$ that ensures equal optimality between taking action 0 and action 1 in state $s$ within the single-arm problem $J_i(\lambda)$. That is, 
\begin{align} \label{eq:WI}
    \text{WI}_i(s) = \inf \{\lambda: Q_i(s,1,\lambda)=Q_i(s,0,\lambda)\}.
\end{align}

A limitation of the Whittle index policy is its applicability only to indexable RMABs. Specifically, we denote the set of states in which action 0 is optimal for problem $J_i(\lambda)$ by $\mathcal{E}_i(\lambda)$. 
\begin{definition} [Indexability]
    An arm $\mathcal{B}_i$ is considered \textit{indexable} if $\mathcal{E}_i(\lambda)$ expands monotonically from the empty set to the entire state space $\mathcal{S}_i$ as $\lambda$ increases from $-\infty$ to $\infty$. An RMAB is indexable if all its arms are indexable. 
\end{definition}

Recall that $\lambda$ can be interpreted as the cost of taking action $a=1$. If an arm $\mathcal{B}_i$ is indexable, then the optimal action for this arm in state $s$ is $a=1$ if the cost $\lambda\le \text{WI}_i(s)$; otherwise, the optimal action is $a=0$. Therefore, $\text{WI}_i(s)$ can be interpreted as the ``value'' of taking action $a=1$ in state $s$. The Whittle index policy then uses this value as a heuristic to prioritize arms, selecting those with the highest indices. However, in the absence of indexability, this property no longer holds—rendering $\text{WI}_i(s)$ an invalid metric for prioritization. As a result, the Whittle index policy is well-defined only for indexable RMABs.

To illustrate that the indexability does not hold for all RMABs, we construct a non-indexable arm as a counterexample.
\begin{example} [A Non-indexable Arm]
    Consider an arm $\mathcal{B}_i$ with 6 states and transition probabilities illustrated in Fig. \ref{fig:nonWI}. The reward function is defined as $r_i(1,1)=-10, r_i(1,0)=-4, r_i(2,1)=r_i(2,0) = 4$ and $r_i(s,1)=0, r_i(s,0)=-2$ for $s\in \{3,4,5,6\}$. It can be verified that $1\in \mathcal{E}_i(\lambda)$ for $-4\le \lambda \le 2$ and $1\notin \mathcal{E}_i(\lambda)$ for $\lambda<-4$ and $\lambda > 2$. Hence the set $\mathcal{E}_i(\lambda)$ does not expand monotonically as $\lambda$ increases, implying that the arm is not indexable.
\end{example}

\begin{figure}[t]
	\centering
	\includegraphics[width=2.3in]{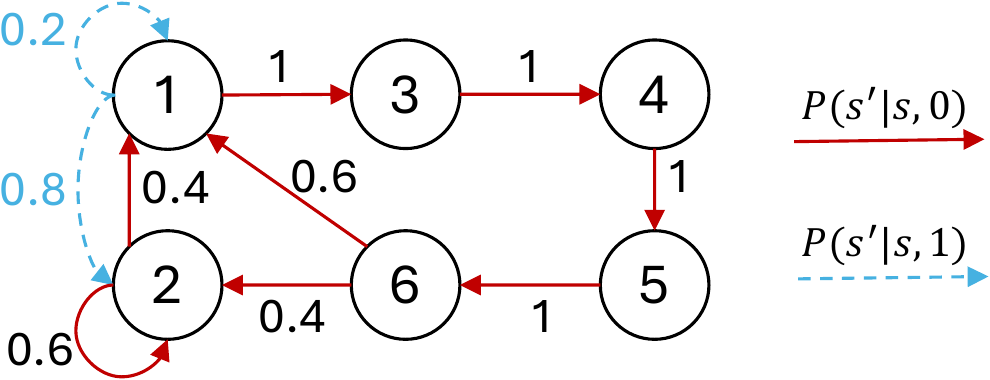}
	\caption{Transition probabilities of a nonindexable arm. Except for state 1, each of the remaining states has identical transition probabilities for both actions. For simplicity, we only plot the transitions of action 0 for those states.}
	\label{fig:nonWI}	
\end{figure}

Nevertheless, the quantity defined by \eqref{eq:WI} remains well-defined, even for non-indexable arms. This raises a natural question: \textit{what happens if the Whittle index policy is applied to a non-indexable RMAB?} Moreover, in existing Whittle-index-based algorithms such as WIBQ \cite{avrachenkov2022whittle} and Neurwin \cite{nakhleh2021neurwin}, the Whittle index is typically determined by learning a value of $\lambda$ that satisfies the condition $Q_i(s,1,\lambda) = Q_i(s,0,\lambda)$. However, in Example 1, both $-4$ and $2$ satisfy this condition. As a result, the algorithm may converge to one of these values arbitrarily, or even oscillate if multiple such solutions are close in value. \textit{How do Whittle-index-based algorithms perform under such ambiguity?}

As demonstrated by our experiments (see Section \ref{subsec: non-index}), applying the Whittle index policy to an RMAB with arms as defined in Example 1 can lead to arbitrarily poor performance.  Moreover, as previously discussed, establishing indexability is analytically challenging and typically requires system knowledge—often unavailable in learning settings. These limitations underscore the fragility of Whittle-index-based learning algorithms in the absence of indexability guarantees and highlight the importance of the gain index policy, which operates without such assumptions. These considerations motivate the development of the GINO-Q learning algorithm, which aims to learn the gain index policy efficiently in model-free settings.

\section{Gain-Index-Oriented Q Learning} \label{sec: GINO-Q}
With asymptotic optimality and no reliance on indexability, learning the gain index policy offers a promising solution for RMABs without system knowledge, particularly for large-scale problems. 
This section presents the GINO-Q learning algorithm for RMABs. Our approach involves decomposing the RMAB into  single-arm problems and learning the gain indices for each arm. 

By definition, the gain indices of an arm are determined by the Q-function of the corresponding single-arm problem under the optimal activation cost $\lambda^*$. While learning the Q function for a fixed $\lambda$ reduces to a standard Q learning task, jointly learning $\lambda^*$ introduces additional complexity due to the coupling between the dual variable and the value functions. Specifically, in the absence of system knowledge, the gradient required to update $\lambda$ cannot be computed directly--and, moreover, it cannot be estimated directly during the Q-learning process.

To overcome this challenge, we propose a three-timescale stochastic approximation algorithm. The algorithm updates the Q-function of $J_i(\lambda)$ and the dual variable $\lambda$ on medium and slow timescales, respectively. A third, faster timescale is employed to estimate the derivative of $g_i(\lambda)$ with respect to $\lambda$, which plays a critical role in guiding the update of $\lambda$.

\subsection{Useful Properties}
We will begin by introducing some useful definitions and establishing key properties of the optimization problem \eqref{eq: inf-lambda}, which form the basis of our method.
We first define an auxiliary MDP for each arm as follows:
\begin{definition}[Auxiliary MDP]
The auxiliary MDP associated with arm $\mathcal{B}_i$ is defined as $\mathcal{M}_i = (\mathcal{S}_i, \mathcal{A}_i, c, p_i)$, where the state space $\mathcal{S}_i$, action space $\mathcal{A}$, and transition kernel $p_i$ are identical to those of $\mathcal{B}_i$. The cost function is given by $c(s, a) = a$ for all $s \in \mathcal{S}_i$ and $a \in \mathcal{A}_i$.
\end{definition}

The long-term average cost of $\mathcal{M}_i$ under a policy $\pi$ is given by
\begin{align*}
	h_i^\pi(s) \triangleq \lim\limits_{T\to \infty}\frac{1}{T} \mathbb{E}_\pi \left[  \sum_{t=1}^{T} a_i^t |s^1_i = s \right], \ \forall s\in \mathcal{S}_i. 
\end{align*}
Since $\mathcal{B}_i$ is a unichain MDP, so is $\mathcal{M}_i$. As a result, the average cost $h_i^{\pi}(s)$ is independent of the initial state $s$, and we will henceforth denote it simply as $h_i^{\pi}$. Let $D^\pi_i(s,a)$ denote the state-action value function of $\mathcal{M}_i$ under policy $\pi$, and $\pi(a'|s')$ the probability of taking actin $a'$ in state $s'$. Then we have
\begin{align*}
	D_i^\pi(s,a) + h_i^\pi = a + \sum_{s'\in \mathcal{S}_i}  \sum_{a'\in \mathcal{A}} p_i(s'|s,a)\pi(a'|s') D_i^\pi(s',a').
\end{align*}
Clearly, $J_i(\lambda)$ and $\mathcal{M}_i$ share the same policy space. Let $\pi$ be a policy for the single-arm problem $J_i(\lambda)$, and $V_i^\pi(s)$ and $g_i^\pi$ be the associated value function and long-term average reward, respectively. 
It is easy to see that, for any policy $\pi$,
\begin{align}
    \frac{d g_i^\pi}{d \lambda} = -h_i^\pi, \ \forall i\in [M].
\end{align}
Let $\pi_i^\lambda$ denote the optimal policy for problem $J_i(\lambda)$, then the derivative of function $f(\lambda)$ is given by
\begin{align} \label{eq: f'}
	f'(\lambda) = \sum_{i=1}^{M} \frac{d g_i(\lambda)}{d \lambda} + N = N-\sum_{i=1}^{M} h^{\pi_i^\lambda}_i.
\end{align}
Drawing from these concepts, we can establish the following properties regarding $f(\lambda)$ and $\lambda^*$:
\begin{lemma} \label{lem: 1}
	For any RMAB with bounded reward functions $\{r_i\}_{i\in [M]}$, $f(\lambda)$ is a piecewise linear and convex function. In addition, there always exist a bounded $\lambda^*$ that achieves the minimum value of $f(\lambda)$.
\end{lemma}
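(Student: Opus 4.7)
The plan is to establish the two claims separately, both exploiting the finite structure of the single-arm MDPs. For piecewise linearity and convexity: since each arm has a finite state space $\mathcal{S}_i$ and action space $\mathcal{A}_i=\{0,1\}$, the unichain average-reward MDP $J_i(\lambda)$ admits a deterministic stationary optimal policy, and the set $\Pi_i$ of all deterministic stationary policies is finite. For any fixed $\pi\in\Pi_i$, the long-term average reward of $J_i(\lambda)$ under $\pi$ can be written as $g_i^\pi(\lambda)=\bar r_i^\pi-\lambda h_i^\pi$, where $\bar r_i^\pi$ and $h_i^\pi$ are independent of $\lambda$; this is affine in $\lambda$. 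Hence
\[
g_i(\lambda)=\max_{\pi\in\Pi_i}\bigl(\bar r_i^\pi-\lambda h_i^\pi\bigr)
\]
is a pointwise maximum of finitely many affine functions, which is standardly known to be piecewise linear and convex. Summing over $i\in[M]$ and adding the linear term $N\lambda$ preserves both properties, yielding the first claim for $f$.

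For the existence of a bounded minimizer, I would use coercivity: show that $f(\lambda)\to+\infty$ as $|\lambda|\to\infty$, so that the infimum, which exists by convexity and lower semi-continuity, is in fact attained on a bounded set. As $\lambda\to+\infty$, the boundedness of $\{r_i\}$ guarantees that the activation cost eventually dominates any reward differential, so that the ``always take action $0$'' policy $\pi_i^{0}$ becomes optimal in each $J_i(\lambda)$; then $g_i(\lambda)\equiv\bar r_i^{\pi_i^{0}}$ is constant and $f(\lambda)=\sum_i\bar r_i^{\pi_i^{0}}+N\lambda\to+\infty$. Symmetrically, as $\lambda\to-\infty$, the ``always take action $1$'' policy $\pi_i^{1}$ becomes optimal, giving $h_i^{\pi_i^{1}}=1$ and
\[
f(\lambda)=\sum_i\bar r_i^{\pi_i^{1}}+(N-M)\lambda\ \longrightarrow\ +\infty,
\]
since $N<M$. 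Combined with the continuity of $f$ (it is piecewise linear), this forces the minimum to be attained at some finite $\lambda^*$.

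The main obstacle is the step that identifies the extremal policies $\pi_i^{0}$ and $\pi_i^{1}$ as optimal for $|\lambda|$ sufficiently large. I would handle this via the Bellman equation: writing $Q_i(s,1,\lambda)-Q_i(s,0,\lambda)=-\lambda+\Delta_i(s,\lambda)$, where $\Delta_i(s,\lambda)$ depends on the rewards and the relative value function. For unichain MDPs with uniformly bounded rewards, the relative value functions remain bounded in $\lambda$ on the corresponding linear pieces (as $\pi_i^{0}$ and $\pi_i^{1}$ do not depend on $\lambda$), so $\Delta_i(s,\lambda)$ is bounded and the sign of $Q_i(s,1,\lambda)-Q_i(s,0,\lambda)$ is controlled by $-\lambda$ once $|\lambda|$ is large. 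A cleaner alternative is to argue directly from the piecewise linear representation established above: only finitely many slopes $N-\sum_i h_i^{\pi_i}$ with $h_i^{\pi_i}\in[0,1]$ can occur for $f$, and the two extreme slopes $N>0$ (all $h_i^{\pi_i}=0$) and $N-M<0$ (all $h_i^{\pi_i}=1$) must be realized on the leftmost and rightmost pieces; convexity then forces the minimizer into the bounded region where the slope transitions from negative to nonnegative.
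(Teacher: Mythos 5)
Your proposal is correct and follows essentially the same route as the paper: each $g_i^\pi(\lambda)$ is affine with slope $-h_i^\pi\in[-1,0]$ independent of $\lambda$, so $g_i$ and hence $f$ are piecewise linear and convex, and the extreme slopes $N-M<0$ (as $\lambda\to-\infty$) and $N>0$ (as $\lambda\to+\infty$) force a bounded minimizer. Your explicit appeal to the finiteness of the deterministic stationary policy class and the coercivity framing are minor refinements of the paper's derivative-sign argument, not a different approach.
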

\begin{IEEEproof}
	By definition, we have $g_i(\lambda) = \max_{\pi} g^\pi_i(\lambda)$.
	As mentioned above, for any policy $\pi$, the derivative ${d g_i^\pi}/{d \lambda} = -h_i^\pi\le 0$ is independent of $\lambda$, implying $g^\pi_i(\lambda)$ is a decreasing linear function of $\lambda$. Consequently, $g_i(\lambda)$ is decreasing, piecewise linear and convex given that it is the maximum of a set of decreasing linear functions. It then follows immediately that $f(\lambda)$ is piecewise linear and convex.
	
	Furthermore, since $a^t_i\in \{0,1\}$ for all $t$ and $i$, we have $0\le h^\pi_i \le 1$. Hence $-1 \le {d g_i}/{d \lambda} \le 0$ for all $i\in [M]$. It is easy to verify that, as $\lambda\to -\infty$, the optimal policy for $J_i(\lambda)$ is taking action 1 all the time. That is, 
	\begin{align*}
		\lim\limits_{\lambda\to -\infty} \frac{d g_i(\lambda)}{d \lambda} = -1, \ \forall i\in [M].
	\end{align*}
	Using the same argument yields $\lim_{\lambda\to \infty}{d g_i}/{d \lambda} = 0$. It follows that
	\begin{align*}
		\lim\limits_{\lambda\to -\infty}  f'(\lambda) = N-M <0, \ \lim\limits_{\lambda\to \infty}  f'(\lambda) = N >0.
	\end{align*}
	Therefore, there must exist a bounded $\lambda_0$ satisfying
	\begin{align*}
		f'(\lambda^-_0) \le 0, \  f'(\lambda^+_0) \ge 0.
	\end{align*}
	This $\lambda_0 = \lambda^*$ achieves the minimum of $f(\lambda)$.
\end{IEEEproof}

Lemma~\ref{lem: 1} guarantees the existence of a bounded optimal solution $\lambda^*$, thereby establishing the feasibility of learning $\lambda^*$ in practice.

\subsection{GINO-Q learning algorithm}

Our objective is to learn the optimal dual variable $\lambda^*$ and the corresponding Q-functions for the problems $J_i(\lambda^*)$, for all $i \in [M]$. To this end, we propose a simple yet effective algorithm that simultaneously updates $\lambda$ and learns the associated Q-functions, thereby enabling the computation of gain indices for all arms. We refer to this approach as Gain-Index-Oriented Q-learning (GINO-Q).

In particular, we fix a stepsize sequence $\{\theta^t:t\ge 1\}$, and employ the stochastic gradient-descent method to update $\lambda$:
\begin{align}  \label{eq: iter-lambda}
	\lambda^{t+1} = \lambda^t - \theta^t \hat{f}'(\lambda^t),
\end{align}
where $\hat{f}'(\lambda^t)$ is an estimator of $f'(\lambda^t)$, to be detailed later. Meanwhile, we use the standard relative value iteration (RVI) Q learning algorithm \cite{abounadi2001learning} to learn the Q function of every single-arm problem $J_i(\lambda^t)$ for the current $\lambda^t$. That is, $\forall i\in [M]$:
\begin{align}  \label{eq: iter-Q}
	Q_i^{t+1}(s^t_i,a^t_i) = & Q_i^{t}(s^t_i,a^t_i) + \beta^t_i [ r_i(s^t_i,a^t_i) - \lambda^t a^t_i + \notag \\
	&  \max_{a} Q_i^{t}(s^{t+1}_i,a) - Q_i^{t}(s^t_i,a^t_i) - g^t_i ],
\end{align}
where $\{\beta^t_i:t\ge 1\}$ is a predefined stepsize sequence. While $g_i^t$ can be estimated in various ways \cite{abounadi2001learning}, one of the most widely used methods is:
\begin{align} \label{eq: est_g}
	g_i^t = \frac{1}{2|\mathcal{S}|}\sum_{s\in \mathcal{S}_i}\sum_{a\in \mathcal{A}_i} Q^t_i(s,a) . 
\end{align}
We proceed by constructing the estimator $\hat{f}'(\lambda)$ according to \eqref{eq: f'}. Given any $\lambda$, the optimal policy $\pi^\lambda_i$ for $J_i(\lambda)$ selects actions greedily according to the optimal Q function of $J_i(\lambda)$, denoted by $Q_i$. At time step $t$, $Q^t_i$ serves as an estimate of $Q_i$. Hence, the policy that select actions greedily according to $Q^t_i$ can be regarded as an estimate of $\pi^\lambda_i$. To learn the average cost of this policy for the auxiliary MDP $\mathcal{M}_i$, we employ the SARSA algorithm \cite{sutton2018reinforcement}:
\begin{align}  \label{eq: iter-D}
	D_i^{t+1}(s^t_i,a^t_i) =   D_i^{t}(s^t_i,a^t_i) + \alpha^t_i [& a^t_i +   D_i^{t}(s^{t+1}_i,a_i^{t+1}) - \notag  \\
	&D_i^{t}(s^t_i,a^t_i) - h^t_i ],
\end{align}
where $\{\alpha^t_i:t\ge 1 \}$ is the stepsize sequence, action $a_i^{t+1}$ is selected greedily based on $Q^t_i$, that is, 
$$a_i^{t+1} = \arg\max_{a} Q_i^{t}(s^{t+1}_i,a).$$
As in \eqref{eq: iter-Q}, the average cost of $\mathcal{M}_i$ is estimated by
\begin{align} \label{eq: est_h}
	h_i^t = \frac{1}{2|\mathcal{S}|}\sum_{s\in \mathcal{S}_i}\sum_{a\in \mathcal{A}_i} D^t_i(s,a) .
\end{align}
We then estimate $f'(\lambda^t)$ as follows: 
$$\hat{f}'(\lambda^t) = N-\sum_{i=1}^{M} h^t_i.$$

The stepsize schedules of the three coupled iterates play a critical role in learning both $\lambda^*$ and the corresponding Q functions. As shown in equation~\eqref{eq: f'}, for any given $\lambda$, the derivative $f’(\lambda)$ depends on the optimal policy $\pi_i^\lambda$ for the single-arm problem $J_i(\lambda)$. Therefore, the update of the sequence $\{\lambda^t\}$ must proceed more slowly than that of $Q_i^t$, allowing Q-learning to sufficiently approximate $Q_i$. Conversely, since SARSA aims to estimate the state-action value function of $\mathcal{M}_i$ under the greedy policy induced by $Q^t_i$, its updates should operate at a faster timescale than Q-learning.

We define the stepsize sequences of the three coupled iterates properly so that they form a three-timescale stochastic approximation algorithm, with updates \eqref{eq: iter-lambda}, \eqref{eq: iter-Q}, and \eqref{eq: iter-D} operating in the slow, medium, and fast timescales, respectively. In particular, define\footnote{In practice, $t$ in the expressions of $\alpha^t_i $ and $\beta^t_i $ can be replaced by $n(s^t_i,a^t_i)$, the number of occurrences of $(s^t_i,a^t_i)$ up to time $t$. If $\lambda^t$ changes greatly compared with $\lambda^{t-1}$, we may reset $n(s,a)$ as if encountering a new RL problem. This may speed up convergence. }
\begin{align*}
	\alpha^t_i = \frac{C_1}{t}, \beta^t_i = \frac{C_2}{t \sqrt{\log t}}, \theta^t = \frac{C_3}{t \log t} \mathbf{1}\{t (\text{mod }C_4)=0\},
\end{align*}
where $\{C_k\}$ are constants, $\mathbf{1}\{\cdot\}$ is the indicator function, $\{\alpha^t_i \}$ and $\{\beta^t_i \}$ are invariant across $i\in[M]$.
It is easy to verify that $\sum_{t=1}^{\infty} x^t = \infty$ and $\sum_{t=1}^{\infty} (x^t)^2 < \infty$ for $x=\alpha_i, \beta_i$ or $\theta$. Update \eqref{eq: iter-D} is faster than \eqref{eq: iter-Q} because $\beta^t_i = o(\alpha_i^t)$. A similar argument indicates that update \eqref{eq: iter-Q} is faster than \eqref{eq: iter-lambda}.

The GINO-Q algorithm is summarized in Algorithm \ref{alg:ginoQ}. 
A detail to note is line 14, where $\lambda^t$ is updated only if the estimated derivative is decreased in absolute value. Since $f(\lambda)$ is piecewise linear and convex, this strategy helps mitigate erroneous updates caused by noise in the estimation of $f’(\lambda)$. While this may slightly slow convergence near $\lambda^*$, it improves stability and robustness during learning.

\begin{algorithm}[tb]
	\caption{GINO-Q Learning}
	\label{alg:ginoQ}
	\textbf{Input}: Integer $T$, learning rates $\{\alpha^t_i \}$, $\{\beta^t_i \}$, and $\{\theta^t \}$\\
	\textbf{Initialization}: Let $t=1, \lambda^1 = 0, y^0=M$. Reset the RMAB and receive the initial arm states $\{s^1_i \}$\\
	\textbf{Output}: Gain indices of all arms
	\begin{algorithmic}[1] 
		\WHILE{$t\le T$}
		\FOR{$i=1,2,\cdots,M$}
		\STATE Select action $a^t_i$ according to $Q^t_i$ (e.g., $\epsilon$-greedy)
		\STATE Apply action $a^t_i$ to the $i$-th arm, observe reward $r^t_i$ and the new state $s^{t+1}_i$
            \STATE Compute $g^t_i$ using eq. \eqref{eq: est_g}
		\STATE $\delta_i^{t} = r^t_i - \lambda^ta^t_i +  \max_a Q_i^{t}(s^{t+1}_i,a) - Q_i^{t}(s^t_i,a^t_i) - g^t_i$
		\STATE $Q_i^{t+1}(s^t_i,a^t_i) = Q_i^{t}(s^t_i,a^t_i) + \beta^t_i \delta_i^{t} $ 
            \STATE Compute $h^t_i$ using eq. \eqref{eq: est_h}
		\STATE $b^{t+1}_i  = \arg \max_a Q_i^{t}(s^{t+1}_i,a)$  
		\STATE $\sigma^t_i = a^t_i + D_i^{t}(s^{t+1}_i,b^{t+1}_i) - D_i^{t}(s^t_i,a^t_i) - h^t_i$
		\STATE $D_i^{t+1}(s^t_i,a^t_i) = D_i^{t}(s^t_i,a^t_i) +\alpha^t_i \sigma^t_i $ 
		\ENDFOR
		\STATE $y^t = N - \sum_{i=1}^{M} h_i^t $
		\STATE $\lambda^{t+1} = \lambda^t - \theta^t \mathbf{1}\{|y^t|<|y^{t-1}|\}y^t$
		\STATE $t\leftarrow t+1$
		\ENDWHILE
		\FOR{$i=1,2\cdots,M$}
		\FOR{$s\in \mathcal{S}_i$}
		\STATE $W_i(s) = Q_i^T(s,1) - Q_i^T(s,0) \ $  // gain index
		\ENDFOR
		\ENDFOR
	\end{algorithmic}
\end{algorithm}
\textit{Remark:}
If the learner has knowledge of the arm classification (but not the specific parameters of arms), updates \eqref{eq: iter-Q} and \eqref{eq: iter-D} do not need to be performed for each individual arm. Instead, it is sufficient to maintain a pair of $(Q^t_i, D^t_i)$ for each class. This is because arms within the same class share the same Q and D functions. As a result, the complexity of GINO-Q increases linearly with respect to the number of classes $K$. If the arm classification is unknown, then the complexity scales linearly with respect to the number of arms $M$, which still represents a significant reduction compared to the exponential growth of the state space.

\section{Discussion} \label{sec: discussion}
The definition of gain indices is dependent on $\lambda^*$, and our GINO-Q learning aims to learn $\lambda^*$ and the corresponding Q function of every single-arm problem $J_i(\lambda^*)$. This section discusses the robustness of GINO-Q learning with respect to the value of $\lambda^*$. Remarkably, we show that the asymptotic optimality of the gain index policy is assured as long as the sequence $\{\lambda^t \}$ converges to a neighbourhood of $\lambda^*$---not necessarily converging precisely to $\lambda^*$. This implies that, as least when $M$ is large, convergence to a neighbourhood of $\lambda^*$ is sufficient to induce a near-optimal index policy.

Formally, for any $\lambda$ and any arm $\mathcal{B}_i$, define
\begin{align*}
	W^\lambda_i(s) \triangleq Q_i(s,1,\lambda) - Q_i(s,0,\lambda), \ \forall s\in \mathcal{S}_i. 
\end{align*}
Then the gain index is $W_i(s) = W^{\lambda^*}_i(s)$. Learning the values of $\{W^\lambda_i(s) \}$ for a given $\lambda$ constitutes a well-studied reinforcement learning task. However, a practical concern arises: how does the performance of the index policy get affected by inaccuracies in estimating $\lambda^*$? The following lemma partially addresses this question by showing that the policy remains asymptotically optimal to a certain degree of estimation error in $\lambda^*$.

\begin{lemma} \label{lem: 2}
	There exists a non-empty interval $(\lambda^l, \lambda^u)$ that contains  $\lambda^*$, such that the gain index policy with indices $\{W^\lambda_i(s) \}$ is asymptotically optimal for any $\lambda\in(\lambda^l, \lambda^u)$.
\end{lemma}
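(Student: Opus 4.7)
My plan is to argue that the gain-index policy depends on $\lambda$ only through the relative ordering of the finite collection of indices $\{W_i^\lambda(s)\}_{i\in[M],\,s\in \mathcal{S}_i}$, and that this ordering is stable under small perturbations of $\lambda$ around $\lambda^*$. The asymptotic optimality at $\lambda^*$ granted by Theorem~1, which holds under \emph{arbitrary} tie-breaking, will then transfer automatically to every sufficiently nearby $\lambda$.

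First I would establish that each $W_i^\lambda(s)$ is a continuous, piecewise linear function of $\lambda$. For any fixed stationary policy $\pi$ of $\mathcal{B}_i$, both the average reward $g_i^\pi(\lambda) = \bar{r}_i^\pi - \lambda h_i^\pi$ and the differential value function are affine in $\lambda$, since the per-step reward in $J_i(\lambda)$ is affine in $\lambda$. Because the number of deterministic stationary policies is finite, $g_i(\lambda) = \max_\pi g_i^\pi(\lambda)$ is piecewise linear and convex (exactly as used in Lemma~\ref{lem: 1}), and within each linearity interval a single policy is optimal, so $V_i(s,\lambda)$ and hence $Q_i(s,a,\lambda)$ are linear in $\lambda$. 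The usual additive-constant ambiguity in $V_i$ cancels in the difference $W_i^\lambda(s) = Q_i(s,1,\lambda) - Q_i(s,0,\lambda)$, so the gain index is well defined and continuous piecewise linear in $\lambda$.

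Next, for each pair of distinct arm-state pairs $((i,s),(j,s'))$, the difference $\phi_{(i,s),(j,s')}(\lambda) := W_i^\lambda(s) - W_j^\lambda(s')$ is piecewise linear and either vanishes identically on an interval---a ``persistent tie'' that the arbitrary tie-breaking clause accommodates---or has only isolated sign changes. Let $\Lambda$ be the finite set of all such isolated sign-change points, taken over every such pair. I then choose $\lambda^l$ as the largest element of $\Lambda\setminus\{\lambda^*\}$ strictly below $\lambda^*$ (with $\lambda^l=-\infty$ if none exists), and $\lambda^u$ as the smallest element strictly above, giving a nonempty open interval $(\lambda^l,\lambda^u)$ containing $\lambda^*$.

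For any $\lambda \in (\lambda^l,\lambda^u)$ and any realized state configuration $(s_1,\dots,s_M)$, the strict ordering of $\{W_i^\lambda(s_i)\}$ coincides with an admissible ordering of $\{W_i^{\lambda^*}(s_i)\}$ under a deterministic tie-breaking rule determined by $\mathrm{sign}(\lambda-\lambda^*)$. Hence the gain-index policy at $\lambda$ selects exactly the same arms, along any sample path, as the gain-index policy at $\lambda^*$ with that tie-breaking, and Theorem~1 supplies the desired asymptotic optimality. The main obstacle is the parametric analysis of step two: carefully justifying the piecewise linearity and well-definedness of $V_i(s,\lambda)$ despite the normalization ambiguity of average-reward value functions, and ruling out pathological coincidences of kinks that could disrupt the finiteness of $\Lambda$. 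Once that is settled, the rest is a combinatorial argument exploiting only the finiteness of the number of pairwise index comparisons.
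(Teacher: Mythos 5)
Your route is genuinely different from the paper's. You treat Theorem 1 as a black box and argue that the \emph{policy itself} is locally constant in $\lambda$: each $W_i^\lambda(s)$ is continuous and piecewise linear, so the pairwise orderings of the finitely many distinct indices (finitely many because arms fall into $K$ classes) can change only at finitely many values of $\lambda$, and in a neighbourhood of $\lambda^*$ the top-$N$ selection coincides with a $\lambda^*$-selection under some deterministic tie-breaking rule. The paper instead re-enters the proof of Theorem 1: it shows that on the two linear segments of $f$ adjacent to $\lambda^*$ (or on the flat piece where $f'=0$) the optimal single-arm policies $\pi^\lambda_i$ do not change and remain optimal for the relaxed RMAB, and that the index policy with $\{W^\lambda_i(s)\}$ reproduces the relaxed-optimal selection whenever the latter activates exactly $N$ arms. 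The paper's version never compares $W^\lambda$ with $W^{\lambda^*}$; it only uses the sign of $W^\lambda_i(s)$ as an encoding of the action of $\pi^\lambda_i$, and it identifies $(\lambda^l,\lambda^u)$ explicitly as a feature of $f$, which is what the Discussion section relies on.

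Two points in your argument need care. First, the step you yourself flag---continuity and well-definedness of $V_i(\cdot,\lambda)$, hence of $W^\lambda_i(s)$, at values of $\lambda$ where the optimal single-arm policy switches, in particular possibly at $\lambda^*$ itself---is load-bearing for you: if $W^\lambda_i$ jumped at $\lambda^*$, the strict ordering for $\lambda$ near $\lambda^*$ need not refine the weak ordering at $\lambda^*$, and the reduction to ``Theorem 1 with tie-breaking'' collapses. It can be repaired (for unichain arms the average-reward optimality equation determines $V_i$ up to an additive constant, and the one-sided limits of the biases of the two adjacent optimal policies both solve that equation at the switch point, so they agree up to a constant that cancels in the difference defining $W$), but it must actually be proved, whereas the paper's argument sidesteps the comparison entirely. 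Second, your $\Lambda$ should contain all isolated \emph{zeros} of the pairwise differences $\phi$, not only sign-change points: at a touch point $\lambda_0\neq\lambda^*$ where $\phi(\lambda_0)=0$ but the two indices are strictly ordered at $\lambda^*$, an adversarial tie-break at $\lambda_0$ could select a set that is not admissible at $\lambda^*$. Both issues are fixable, so this is a viable alternative proof with gaps to fill rather than a wrong approach; you should also state explicitly that Theorem 1 is being invoked for an arbitrary deterministic tie-breaking rule, which the paper's definition of the gain index policy supports.
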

\begin{IEEEproof}
	Theorem 1 establishes the asymptotic optimality by demonstrating that, as $M\to \infty$, the gain index policy achieves an upper bound on the optimal value of the RMAB. This upper bound corresponds to the optimal value of the relaxed RMAB problem, which is defined by objective (1) subject to constraint (3). We will refer to the optimal policy for the relaxed RMAB problem as the OR policy. Let $x^t$ represent the number of arms activated by the OR policy at time $t$. The proof relies on two key facts: (i) when $x^t=N$, both the OR policy and the gain index policy activate the same set of arms at time $t$; (ii) as $M\to \infty$, $x^t/M$ converges to $N/M$. These facts suggest that in the asymptotic scenario, the gain index policy performs nearly identically to the OR policy. Interested readers can refer to \cite{TIT2023} for a detailed proof. 
	
	Let $\mu_\lambda$ denote the policy for the relaxed RMAB that activates the $i$-th arm according to policy $\pi^\lambda_i$. Then the OR policy corresponds to policy $\mu_{\lambda^*}$. In this proof, we show the existence of a non-empty interval $(\lambda^l,\lambda^u)$ such that for any  $\lambda$ within this interval: (1) $\mu_\lambda$ is identical to $\mu_{\lambda^*}$ ; and (2) policy $\mu_\lambda$ and the gain index policy defined by $\{W^\lambda_i(s)\}$ share the relationship described by facts (i) and (ii). This means that in the asymptotic scenario, the gain index policy defined by $\{W^\lambda_i(s)\}$ performs nearly identically to the OR policy, thereby proving the asymptotic optimality.   
	
	Consider the following definition 
	\begin{align*}
		\mathcal{E}^\lambda_i \triangleq \{s\in \mathcal{S}_i: W^\lambda_i(s)>0 \},\ i \in [M].
	\end{align*}
	For the $i$-th arm, $\mathcal{E}^\lambda_i $ is the set of states in which policy $\pi^\lambda_i$ takes action 1. Now, for any $\lambda$, if policy $\mu_\lambda$ activates exactly $N$ arms at some time $t$, then these $N$ activated arms have positive indices $W^\lambda_i(s)$, while the remaining arms have non-positive indices at that time. As a result, the index policy with indices $\{W^\lambda_i(s) \}$ will activate the $N$ arms with positive indices, as they occupy the top $N$ positions. In summary, when policy $\mu_\lambda$ activates $N$ arms, the index policy with indices $\{W^\lambda_i(s) \}$ will match the arm selection of policy $\mu_\lambda$.
	
	As stated in Lemma 1, $f(\lambda)$ is a piecewise linear and convex function, and there exists a bounded $\lambda^*$ that achieves the minimum value of $f(\lambda)$. Then it is easy to see that $\lambda^*$ satisfies one of the following conditions: (1) $f'(\lambda^*)=0$; (2) $f'(\lambda^*)$ is not defined and 
	\begin{align*}
		\lim\limits_{\lambda \uparrow \lambda^*} f'(\lambda) <0, \ \lim\limits_{\lambda \downarrow \lambda^*} f'(\lambda) >0.
	\end{align*}
	
	We start from case (2), $\lambda^*$ is the common endpoint of two segments, as shown in fig. 1 of the paper. Let $\lambda^l$ denote the left endpoint of the left segment, $\lambda^u$ denote the right endpoint of the right segment. Since $g_i(\lambda)$ is convex for all $i$, $f'(\lambda)=N-\sum_{i=1}^{M}g'_i(\lambda)$ remains constant within interval $(\lambda^l, \lambda^*)$ implies that $\pi^\lambda_i$ remains unchanged within this interval for all $i$. Using the same argument yields that $\pi^\lambda_i$ remains unchanged within interval $(\lambda^*, \lambda^u)$  for all $i$. Let $\mu^-$ and $\mu^+$ denote policies $\mu_\lambda$ for $\lambda\in(\lambda^l, \lambda^*)$ and $\lambda\in(\lambda^*, \lambda^u)$, respectively. Then $\mu^-$ and $\mu^+$ are equally optimal for the relaxed RMAB. Hence $x^t/M$ converges to $N/M$ as $M\to \infty$ for both policies.
	
	For the case of $f'(\lambda^*)=0$,  there must exists an interval $(\lambda^l, \lambda^u)$ such that $f'(\lambda)=0$ for $\lambda\in(\lambda^l, \lambda^u)$. According to the same argument as before, $\pi^\lambda_i$ remains unchanged within the interval $(\lambda^l, \lambda^u)$ and it is the OR policy.
	
	In summary, there exists a non-empty interval $(\lambda^l,\lambda^u)$ containing $\lambda^*$ such that, for any $\lambda$ within this interval, the index policy with indices $\{W^\lambda_i(s) \}$ performs nearly identically to the OR policy in the asymptotic scenario. Hence it is asymptotic optimal. The remaining proof follows standard techniques; for further details, refer to \cite{TIT2023}.
\end{IEEEproof}

\begin{figure}[t]
	\centering
	\includegraphics[width=2.3in]{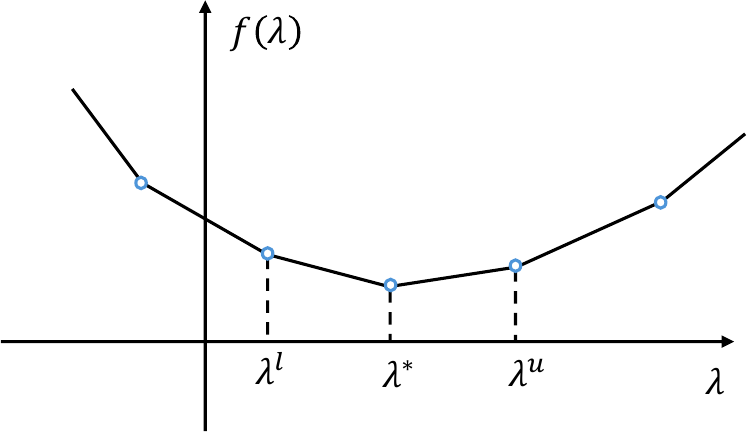}
	\caption{An illustration of  $f(\lambda)$ and $(\lambda^l,\lambda^u)$.}
	\label{fig:func}	
\end{figure}

As demonstrated in the proof, $(\lambda^l,\lambda^u)$ is the interval where $f'(\lambda)=0$, whenever such an interval exists. If no such interval exists, $(\lambda^l,\lambda^u)$ corresponds to the range of the two segments connected to $\lambda^*$ (as shown in Figure \ref{fig:func}). In this latter case, the non-smoothness of $f(\lambda)$ may pose a challenge when searching for $\lambda^*$ using gradient decent methods. Specifically, the sequence $\{\lambda^t \}$ may oscillate around $\lambda^*$ instead of converging to it precisely because $|f'(\lambda)|$ is bounded away from 0 for $\lambda$ in the neighborhood of $\lambda^*$. Fortunately, according to Lemma \ref{lem: 2}, this situation will not affect the asymptotic optimality of the resulting index policy. This property makes GINO-Q particularly well-suited for large-scale RMABs---when $M$ is large (with relatively small number of classes), GINO-Q can learn a near-optimal policy and remains robust to the inaccuracies in $\lambda^*$.

\begin{figure*}[t] \label{fig:non-idex}
	\centering
	\begin{subfigure}[b]{0.32\linewidth} 
		\includegraphics[width=\linewidth]{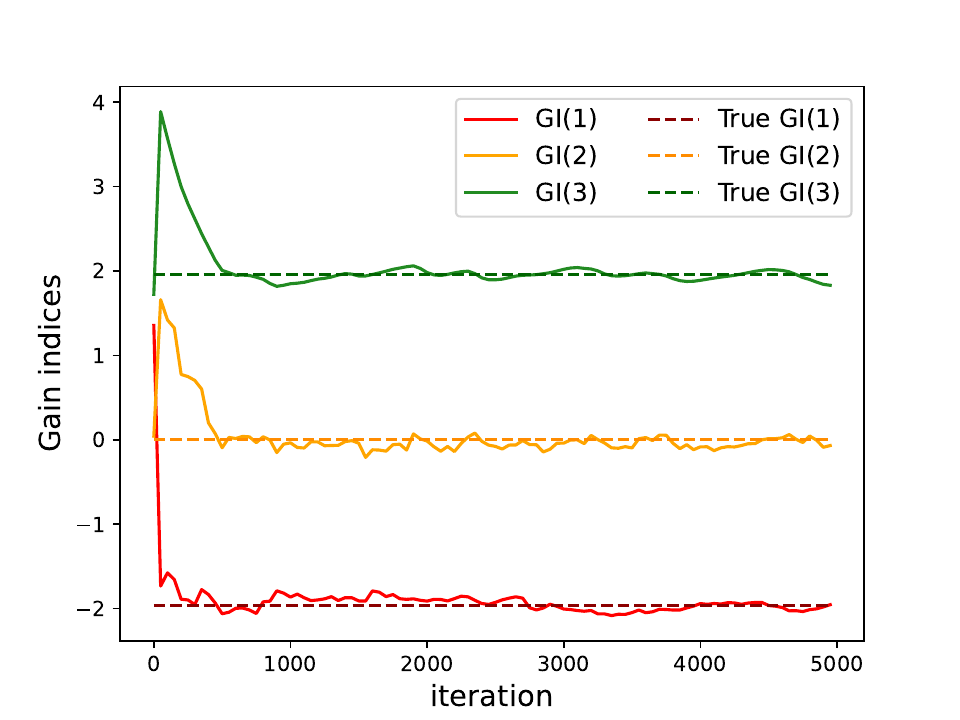}
		\caption{Gain index over training.}
		\label{fig:idx}
	\end{subfigure}
	\begin{subfigure}[b]{0.32\linewidth}
		\includegraphics[width=\linewidth]{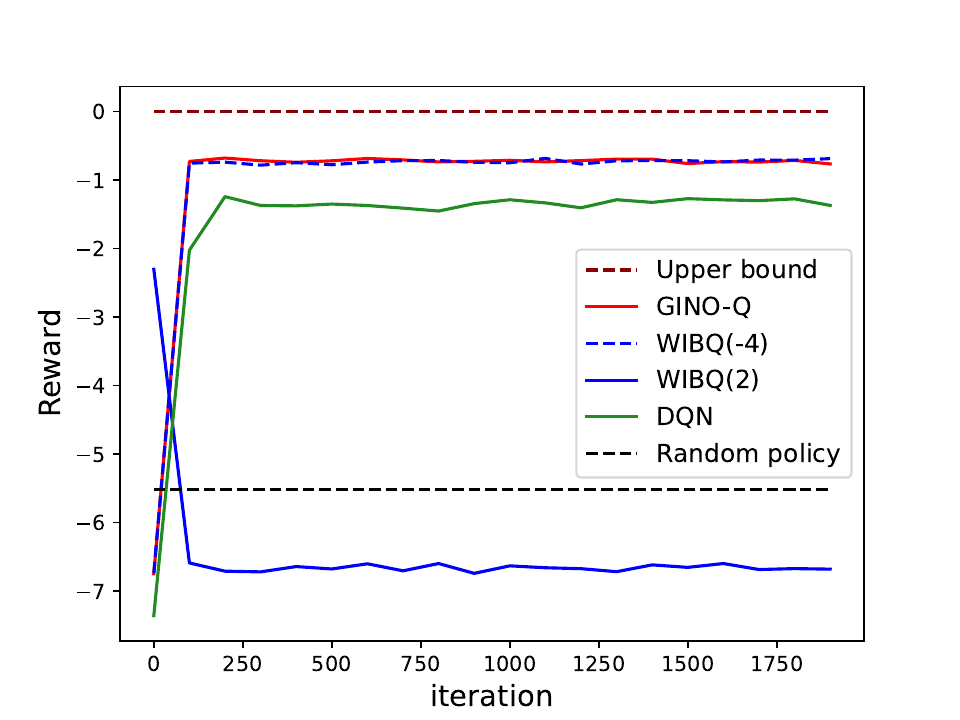}
		\caption{Reward over training ($M=10,N=7$).}
		\label{fig:nonID10}
	\end{subfigure}
	\begin{subfigure}[b]{0.32\linewidth}
		\includegraphics[width=\linewidth]{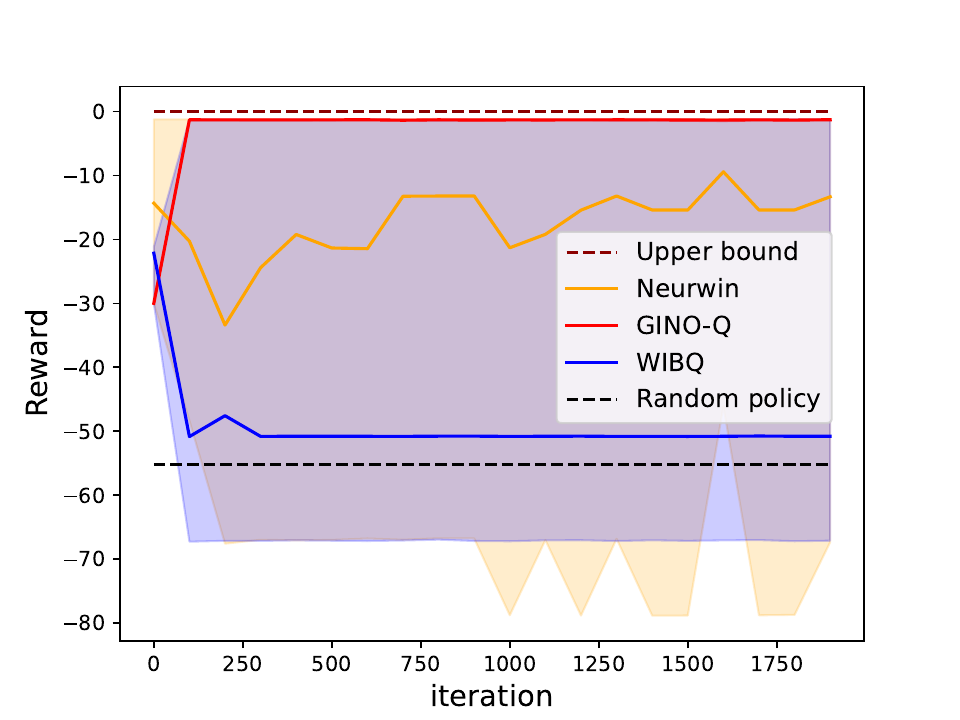}
		\caption{$M=100,N=70$.}
		\label{fig:nonID100}
	\end{subfigure}
	\caption{Performances of the GINO-Q and baseline algorithms in a non-indexable RMAB problem. In (a), GI($s$) denotes the gain index of state $s$. States $3$ to $6$ share the same gain index, hence we only plot GI(3) and omit the others. In (b), WIBQ($x$) corresponds to the result of a single run where the Whittle index of state 1 learned by WIBQ is $x\in \{-4, 2\}$. In (c), the results represent the average of 20 independent runs, with the shaded areas indicating confidence bounds. }
\end{figure*}	
\begin{figure*}[t] \label{fig:AoI}
	\centering
	\begin{subfigure}[b]{0.32\linewidth} 
		\includegraphics[width=\linewidth]{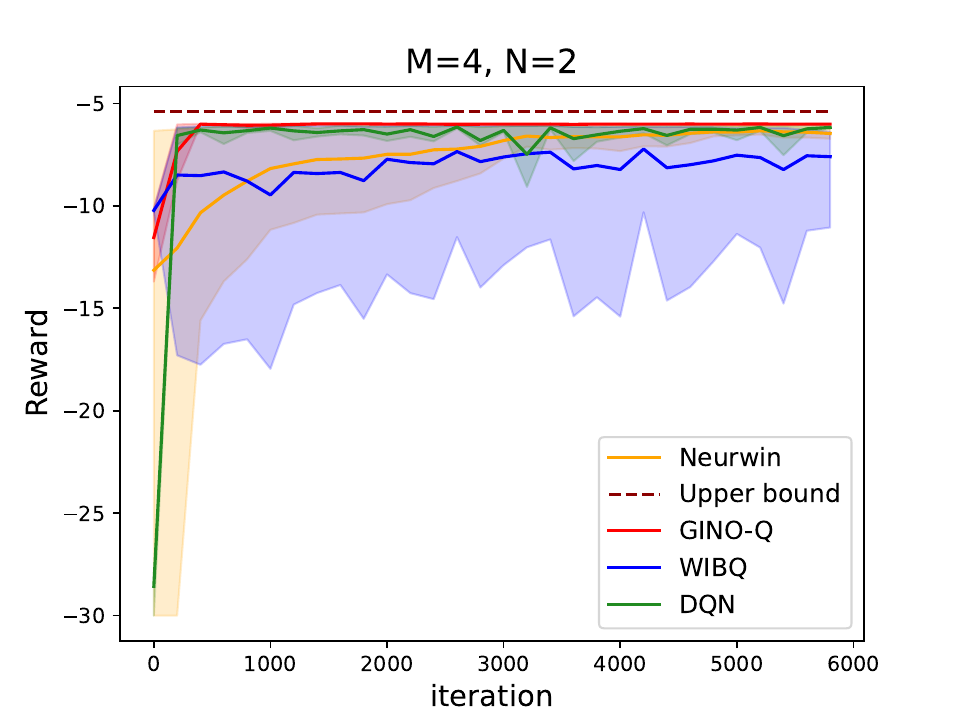}
	\end{subfigure}
	\begin{subfigure}[b]{0.32\linewidth}
		\includegraphics[width=\linewidth]{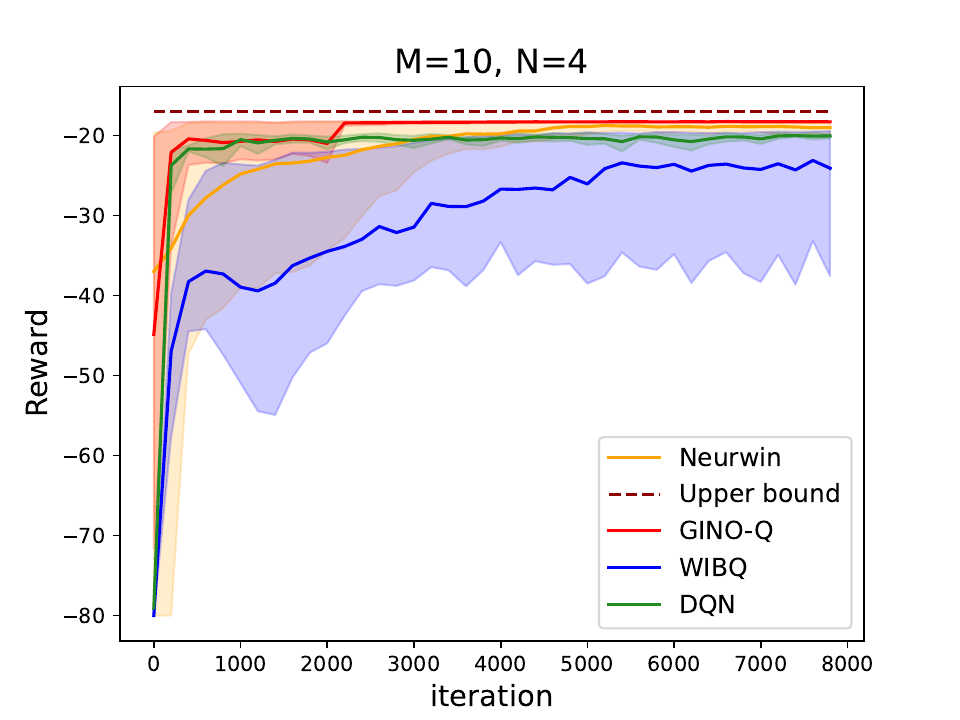}
	\end{subfigure}
	\begin{subfigure}[b]{0.32\linewidth}
		\includegraphics[width=\linewidth]{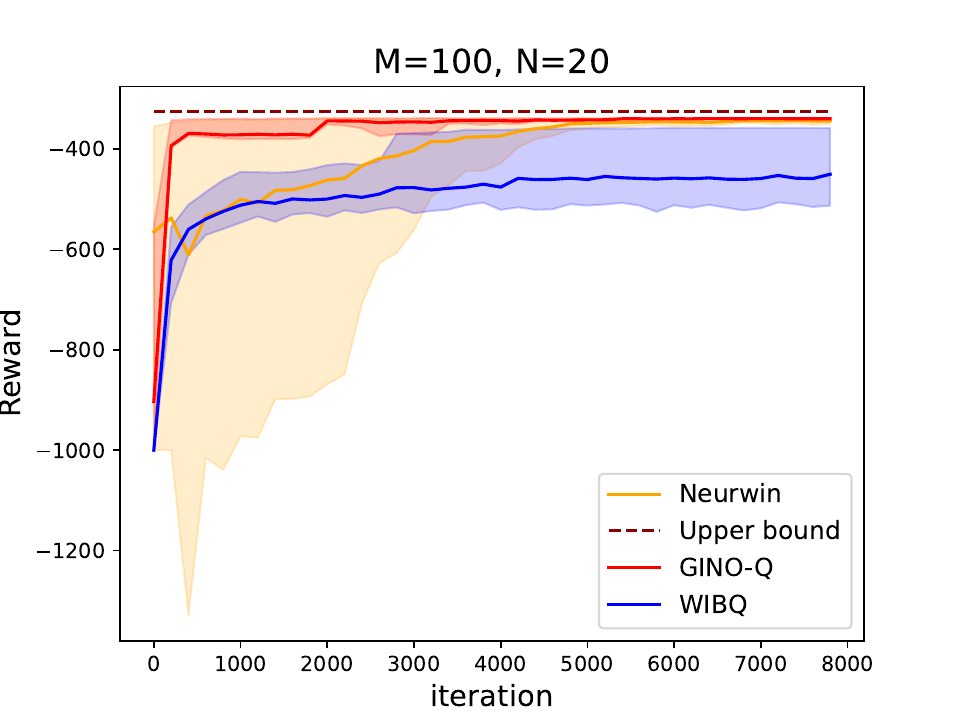}
	\end{subfigure}
	\caption{Performance comparisons between GINO-Q and baseline algorithms in the channel allocation problem.}
\end{figure*}

\section{Experiments}  \label{sec: exps}
In this section, we showcase the performance of GINO-Q learning by evaluating it across three distinct RMAB problems. For each problem, we explore various settings characterized by different pairs of $(M,N)$ values, where $M$ is the number of arms, and $N$ denotes the number of arms to be activated at each time. 

The baseline algorithms include the conventional reinforcement learning method DQN \cite{mnih2015human}, as well as recently developed Whittle-index-based approaches: WIBQ \cite{avrachenkov2022whittle} and Neurwin \cite{nakhleh2021neurwin}. Among these, DQN models the entire RMAB as a single MDP. Under this formulation,  the state space grows exponentially in $M$, and the number of valid actions is $M\choose N$. As $M$ increases, the RMAB rapidly becomes an MDP with large discrete state and action spaces, a scenario that DQN struggles to handle. Therefore, we only evaluate DQN in small-scale problems. On the other hand, WIBQ and Neurwin are state-of-the-art algorithms specifically designed for RMABs, with a focus on learning the Whittle index policy. They serve as our primary baselines for comparison. Additional details of the experiments can be found in the appendix.

\subsection{Non-indexable RMAB} \label{subsec: non-index}
Let us first compare the proposed GINO-Q learning with the Whittle-index-based learning algorithms in a non-indexable RMAB. Both WIBQ and Neurwin assume that the given RMAB is indexable because the Whittle index policy is only defined for indexable RMABs. However, determining the indexability of an RMAB can be challenging, particularly in the absence of system knowledge. What happens if we apply a Whittle-index-based learning algorithm to a non-indexable problem? To answer this question, we constructed a non-indexable RMAB with arms as defined in Example 1 and assessed the performances of  GINO-Q and  WIBQ  in this setting. 

We first consider the setting with $M=10$ and $N=7$. Figure \ref{fig:idx} demonstrates that GINO-Q effectively acquires the gain indices. It is important to note that the key point of the gain index policy is the relative ordering of states by their gain indices. Consequently, the performance of GINO-Q stabilizes once this order is established (compare Figure \ref{fig:idx} with \ref{fig:nonID10}). As discussed in {Example 1}, there are two values of $\lambda$ (i.e., $-4$ and $2$) that satisfy the condition $Q_i(1,1,\lambda)=Q_i(1,0,\lambda)$. As a result, both can be recognized by the WIBQ algorithm as valid Whittle indices for state 1. In contrast, each of the remaining states admits a unique Whittle index.
In our experiments, WIBQ consistently learned the true Whittle indices for states 2 to 6. However, for state 1, the Whittle index converged to $-4$ in some runs and to $2$ in others. State 1 has the highest priority with index 2 and the lowest priority with index $-4$, resulting in significantly different performances, as shown in Figure \ref{fig:nonID10}.  
For benchmarking purposes, we also evaluated the performance of DQN and the random policy (i.e., selecting $N$ arms randomly at each step). Notably, WIBQ performs even worse than the random policy when the Whittle index of state 1 converges to $2$, highlighting the risk of applying Whittle-index-based learning to non-indexable RMABs. For simplicity, the results of Neurwin are not plotted in this setting as WIBQ already learned the true Whittle indices. 

We also compared the algorithms in the setting of $(M,N)=(100,70)$, as depicted in Figure \ref{fig:nonID100}. The low average of WIBQ suggests that it is more likely to converge to $2$ than $-4$. While Neurwin generally outperforms WIBQ in this RMAB, it similarly faces a high risk of learning a very poor index policy. In contrast, GINO-Q consistently learns a near-optimal index policy.
Moreover, we computed upper bounds for both settings, provided by the optimal value of the relaxed RMAB. 
Figure \ref{fig:nonID100} shows that when $M$ is large, GINO-Q closely approaches the upper bound, providing strong evidence of its asymptotic optimality.

\subsection{Channel Allocation in Communication Networks}
Channel allocation in communication networks involves strategically assigning communication channels to users to optimize the overall system performance. In a simple scenario, a network possesses a finite number of channels, each of which can be allocated to a single user at any time step.
A recent research topic is channel allocation aimed at minimizing the average age of information (AoI). AoI is a metric that measures information freshness, defined as the time elapsed since a user last received a message. \cite{tripathi2019whittle} formulated this problem as an RMAB and proved that this RMAB is indexable.


In this experiment, we consider the min-AoI channel allocation problem with two user classes. Each class has a different level of urgency for receiving fresh information, leading us to define a unique cost function of AoI for the class.
Let $x$ denote the AoI of a user. The cost functions for the two classes are defined as $f_1(x)=2x$ and $f_2(x)=\log x$, respectively. The objective is to minimize the overall cost across the network.

We evaluated the GINO-Q and baseline algorithms across three different scales of $M$ and $N$. The reported results represent averages from 20 independent runs, as presented in Figure 4. It can be observed that GINO-Q consistently achieves the best performance across all settings. While Neurwin is capable of learning a Whittle index policy comparable to the gain index policy, it converges at a significantly slower rate. Additionally, compared to Neurwin and WIBQ, GINO-Q exhibits very low variance, as indicated by their confidence bounds.

\begin{figure*}[t] \label{fig:pat}
	\centering
	\begin{subfigure}[b]{0.32\linewidth} 
		\includegraphics[width=\linewidth]{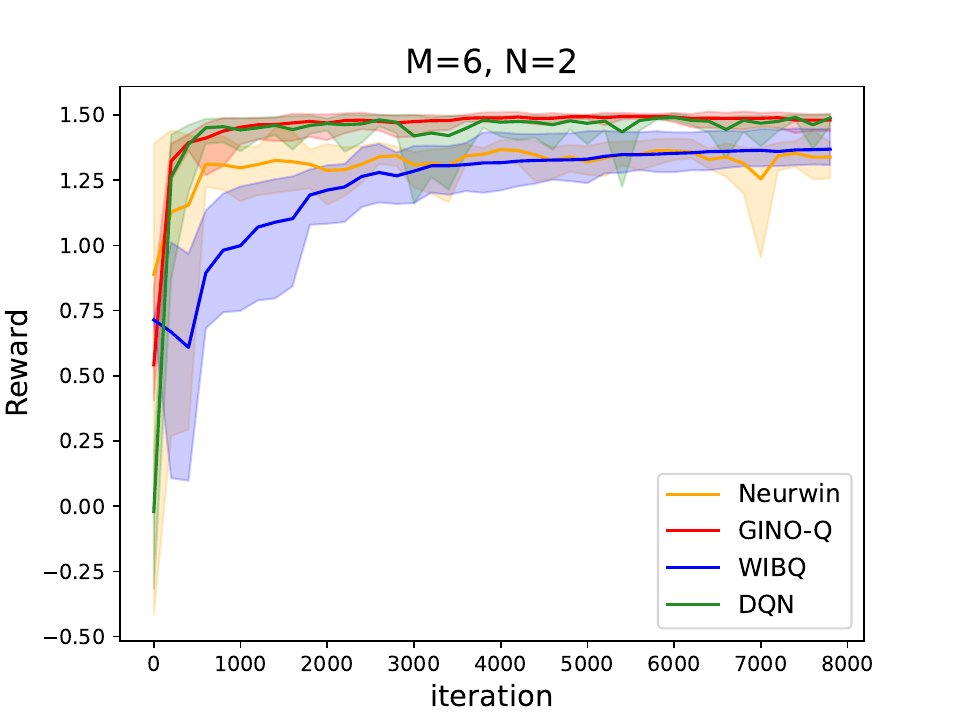}
	\end{subfigure}
	\begin{subfigure}[b]{0.32\linewidth}
		\includegraphics[width=\linewidth]{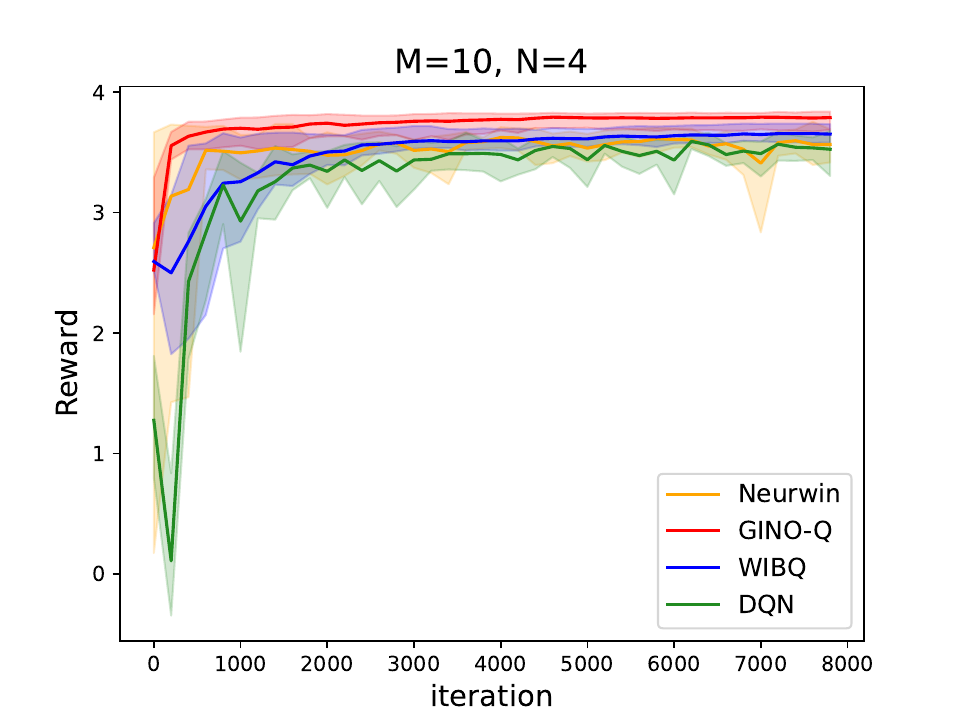}
	\end{subfigure}
	\begin{subfigure}[b]{0.32\linewidth}
		\includegraphics[width=\linewidth]{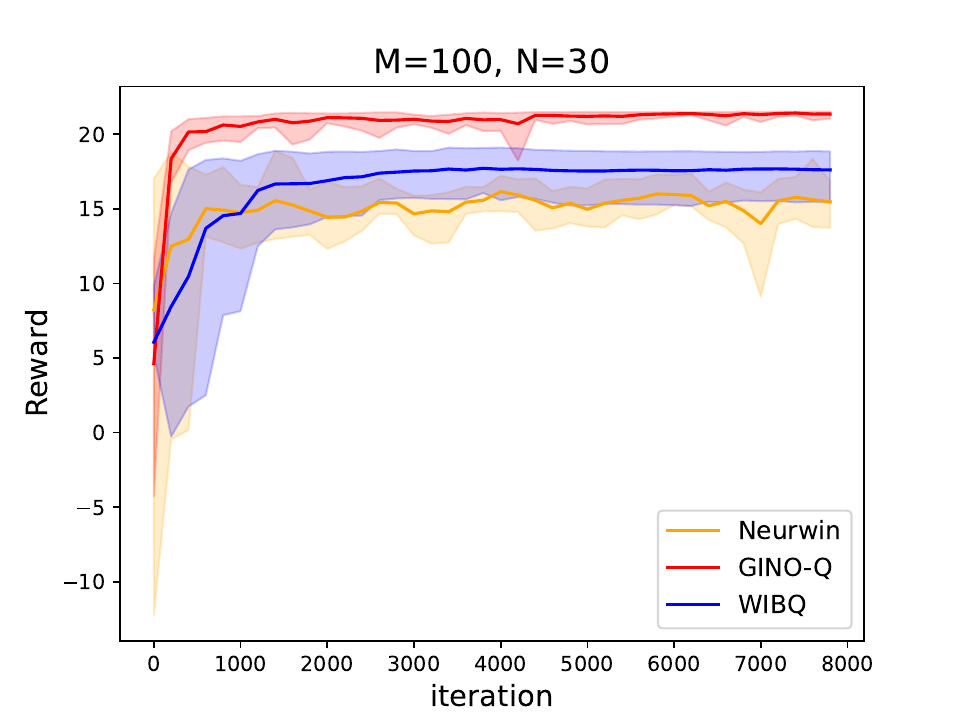}
	\end{subfigure}
	\caption{Performance comparisons between GINO-Q and baseline algorithms in the patrol scheduling problem.}
\end{figure*}
\subsection{Patrol Scheduling}
Patrol scheduling is another application that is often formulated as an RMAB \cite{xu2021dual}. Consider $N$ agents responsible for patrolling $M$ sites. Each agent can visit one site at a time to check for an event of interest. The event is represented as a binary Markov process, where state 1 indicates the event occurs, and state 0 indicates otherwise. The event of interest in a site could be, for example, whether there are illegal intruders in the area. When an agent patrols a site where the event occurs, the controller receives a reward. Conversely, if an event occurs at a site but no agent patrols it, a punishment is generated. In other cases, the controller receives neither reward nor punishment.

Clearly, each site corresponds to an arm in the RMAB. We define two classes of sites, each with a distinct transition matrix for the Markov event. The performances of GINO-Q and the baseline algorithms in this problem are reported in Fig. 5. Unfortunately, the upper bounds obtained from the relaxed RMAB appear to be loose for these settings, so they are not plotted here.
Nevertheless, once again, GINO-Q outperforms all benchmark algorithms across all settings. In the small-scale setting ($M=6$), DQN performs as well as GINO-Q. However, as $M$ increases to 10, DQN’s policy becomes less effective than both the gain and Whittle index policies due to the curse of dimensionality. When $M=100$, the RMAB exhibits enormous state and action spaces that DQN cannot handle. In contrast, GINO-Q is able to learn a near-optimal index policy quickly, even for large-scale RMABs. Its convergence speed is not affected by the increase in $M$, as long as the number of classes is fixed.

\section{Conclusion}  \label{sec: conclusion}
In this paper, we introduced GINO-Q, a novel three-timescale stochastic approximation algorithm designed to address the challenges posed by RMABs. Our approach effectively tackles the curse of dimensionality by decomposing the RMAB into manageable single-arm problems, ensuring that the computational complexity grows linearly with the number of arms. Unlike existing Whittle-index-based learning algorithms, GINO-Q does not require RMABs to be indexable, significantly broadening its applicability.
We showed experimentally that Whittle-index-based learning algorithms can perform poorly in non-indexable RMABs. In contrast, GINO-Q consistently learns near-optimal policies across all experimental RMABs, including non-indexable ones, and shows great efficiency by converging significantly faster than existing baselines.

\appendix
This appendix provides details of experimental settings and the parameters used in the algorithms.
\subsection{Non-indexable RMAB}
The construction of this non-indexable RMAB is inspired by the discussion in \cite{RMAB_Whittle1988}.
The parameters of this RMAB are thoroughly discussed in the main body of our paper and are not repeated here. In this part, we provide additional details on the convergence of the Whittle index of state 1 under the WIBQ algorithm. The value to which the Whittle index of state 1 converges depends on both the random seed and the initial index value. For example, if the seed is $5$ and the initial index value is $0$, then the Whittle index of state 1 converges to $-4$; while if the seed is $387$ and the initial index value is $-6$, then the Whittle index converges to $2$. Notably, even when the initial index value is closer to $-4$, it may still converge to $2$. In addition, using the Neurwin algorithm, we also observed that the Whittle index of state 1 oscillates between $-4$ and $2$.

The parameters of GINO-Q are as follows: $C_1=C_2=1, C_3=20, C_4=200$.

\subsection{Channel Allocation in Communication Networks}
In the channel allocation problem, we consider a network with $M$ users and $N$ channels. We assume that the $N$ channels are unreliable but identical for all users. Specifically, each time a user transmits over a channel, there is a probability of $1-\rho$ that the transmission fails and a probability of $\rho$ that it succeeds. If the transmission is successful, the user's AoI resets to 1; otherwise, the user's AoI increases by 1. 

Let $s^t_i$ denote the AoI of the $i$-th user at time $t$. Theoretically, $s^t_i$ can take any integer value, meaning the state space of each arm is the set of all positive integers, which is infinite. To simplify the problem, we truncate the state space to $\mathcal{S}_i=\{1, 2, \ldots, 100\}$ for all $i$, assuming a maximum AoI of 100 for each user. This truncation is a reasonable approximation in practical applications. For instance, in many cases, if a user does not receive new information within a certain period, they may be considered disconnected and cease functioning. Consequently, the cost associated with AoI for such a user reaches an upper bound and does not increase further with AoI.

Given this setting, the transition probabilities of AoI in our experiments are formally defined as follows: (i) if $a^t_i=0$, then $s^{t+1}_i = \min \{s^t_i+1, 100\}$ with probability 1; (ii) if $a^t_i=1$, then $s^{t+1}_i = \min \{s^t_i+1, 100\}$ with probability $1-\rho$, and $s^{t+1}_i = 1$ with probability $\rho$.
In all three scenarios, we set $\rho = 0.7$. Note that all users share the same transition kernel. However, users in different classes have different cost functions: (i) For users in class 1, the reward (i.e., negative cost) function is $r(s,a) = -2s$ for $a \in \{0,1\}$. (ii) For users in class 2, the reward function is $r(s,a) = -\log(s)$ for $a \in \{0,1\}$.

The parameters of GINO-Q used in this experiment are as follows: $C_1=C_2=1, C_3=20, C_4=200$.

\subsection{Patrol Scheduling}
In the patrol scheduling problem, let $x^t_i\in \{0,1\}$ denote the random variable of the Markov event in site $i$ at time $t$, where $x^t_i=1$ indicates the event happens, and $x^t_i=0$ otherwise. For each $i\in [M]$, $\{x^t_i:t\ge 1\}$ is a Markov chain that is independent of the patrol scheduling. We set two classes of sites in the experiments: (1) For sites in class 1, the transition kernel of the Markov event is given by
\begin{align*}
	&P(x^{t+1}_i=1|x^t_i=0)\triangleq q_{01} = 0.2, \\ 
       &P(x^{t+1}_i=0|x^t_i=1)\triangleq q_{10} = 0.1.
\end{align*}
(2) For sites in class 2, the transition kernel of the Markov event is given by
\begin{align*}
	&P(x^{t+1}_i=1|x^t_i=0)\triangleq \rho_{01} = 0.3, \\ 
        &P(x^{t+1}_i=0|x^t_i=1)\triangleq \rho_{10} = 0.4.
\end{align*}

The state of arm $i$ is represented by the probability that $x^t_i=1$. That is,
\begin{align*}
	s^t_i = P(x^t_i=1|x^{t'}_i)
\end{align*}
where $t'< t$ the latest time point that the $i$ site is patrolled. Given the transition kernel of the Markov event, $s^t_i$ can be determined by $x^{t'}_i$ and $t-t'$. In particular, let $n=t-t'$, then for arms that belong to class 1,
\begin{align*}
	s^t_i = \begin{cases}
		y_0(n)\triangleq\frac{{{q_{01}} - {q_{01}}{{(1 - {q_{01}} - q_{10})}^n}}}{{q_{01} + q_{10}}}, & \text{ if } x^{t'}_i=0 \\
		y_1(n)\triangleq 1 - \frac{{q_{10} -q_{10}{{(1 - q_{01}- q_{10})}^n}}}{q_{01} + q_{10}} & \text{ if } x^{t'}_i=1
	\end{cases}
\end{align*}
Note that both $y_0(n)$ and $y_1(n)$ converge to the same point:
\begin{align*}
	\lim_{n\to \infty} y_0(n) = \lim_{n\to \infty}y_1(n) = \frac{q_{01}}{q_{01} + q_{10}}\triangleq \omega
\end{align*}
Arms in class 2 share the same expression, just replacing $q_{01}$ and $q_{10}$ by $\rho_{01}$ and $\rho_{10}$, respectively.

We observed that $y_0(n) = y_1(n)\approx\omega$ for $n\ge 60$. Therefore, in our experiments, we constructed a finite-state approximation of the single-arm problem by truncating the original state space, which is an infinite set, to a finite-state space $\mathcal{S}_i = \{\omega, y_0(n), y_1(n):1\le n\le 60 \}$. Let $\tau(n) \triangleq \min \{60, n+1\}$, $s^t_i = y_k(n)$ for $k\in \{0,1\}$, then the transition kernel of this finite-state approximation is
\begin{align*}
	p_i(s^{t+1}_i|s^t_i,a^t_i) = \begin{cases}
		1, & \text{if }s^{t+1}_i=y_k(\tau(n)), a^t_i = 0 \\
		y_k(n), & \text{if }s^{t+1}_i=y_1(1), a^t_i = 1 \\
		1-y_k(n), & \text{if }s^{t+1}_i=y_0(1), a^t_i = 1 \\
	\end{cases}
\end{align*}
The reward for the patrol scheduling is set as 
\begin{align*}
	r(x^t_i,a^t_i) = \begin{cases}
		2, & \text{if }x^t_i=1, a^t_i=1 \\
		-1, & \text{if }x^t_i=1, a^t_i=0 \\
		0, & \text{otherwise}
	\end{cases}
\end{align*}
For the single-arm problem, we adopt the average reward for each state $s^t_i$. That is, 
\begin{align*}
	r(s^t_i,a^t_i) = 2s^t_i \mathbf{1}\{a^t_i=1\} - s^t_i \mathbf{1}\{a^t_i=0\} 
\end{align*}
where $\mathbf{1}\{\cdot\}$ is the indicator function.

The parameters of GINO-Q used in this experiment are as follows: $C_1=C_2=1, C_3=2, C_4=200$.

\bibliographystyle{IEEEtran}
\bibliography{reference}

\end{document}